\documentclass{article}

\usepackage[final, nonatbib]{neurips_2023}
\usepackage[utf8]{inputenc}
\usepackage[T1]{fontenc}
\usepackage[natbib,sortcites]{biblatex}

\usepackage{amsbsy}
\usepackage{amsfonts}
\usepackage{amsmath}
\usepackage{amstext}
\usepackage{amsthm}
\usepackage{booktabs}
\usepackage{centernot}
\usepackage{enumitem}
\usepackage{hyperref}
\usepackage{makecell}
\usepackage{mathtools}
\usepackage{microtype}
\usepackage{multirow}
\usepackage{nicefrac}
\usepackage{subcaption}
\usepackage{thmtools}
\usepackage{thm-restate}
\usepackage{tikz}
\usepackage{url}
\usepackage{xcolor}

\usepackage[nameinlink]{cleveref}

\addbibresource{causal_context.bib}

\crefname{lemma}{lemma}{lemmas}
\Crefname{lemma}{Lemma}{Lemmas}
\crefname{thm}{theorem}{theorems}
\Crefname{thm}{Theorem}{Theorems}
\crefname{cor}{corollary}{corollaries}
\crefname{cor}{Corollary}{Corollaries}
\crefname{prop}{proposition}{propositions}
\Crefname{prop}{Proposition}{Propositions}
\crefname{assumption}{assumption}{assumptions}
\crefname{assumption}{Assumption}{Assumptions}
\crefformat{equation}{(#2#1#3)}
\crefformat{figure}{Figure~#2#1#3}
\crefname{example}{Example}{Examples}

\theoremstyle{definition}
\newtheorem{defn}{Definition}

\newcommand\T{\rule{0pt}{2.6ex}}
\newcommand\B{\rule[-1.2ex]{0pt}{0pt}}

\usetikzlibrary{shapes,decorations,arrows,calc,arrows.meta,fit,positioning}
\tikzset{
    -Latex,auto,node distance =1 cm and 1 cm,semithick,
    state/.style ={ellipse, draw, minimum width = 1 cm, , minimum height = 1 cm},
    dashed_state/.style ={ellipse, dashed, draw, minimum width = 1 cm, minimum height = 1 cm},
    box_state/.style ={rectangle, draw, minimum width = 0.8 cm, minimum height = 0.95 cm},
    bidirected/.style={Latex-Latex,dashed},
    el/.style = {inner sep=2pt, align=left, sloped}
}

\raggedbottom

\title{Causal Context Connects Counterfactual Fairness \\ to Robust Prediction and Group Fairness}

\author{
  Jacy Reese Anthis$^{1,2,3}$\thanks{Corresponding author: \href{anthis@uchicago.edu}{anthis@uchicago.edu}} \hspace{1pt}, Victor Veitch$^1$ \\
  $^1$University of Chicago, $^2$University of California, Berkeley, $^3$Sentience Institute \\
}

\begin{document}

\maketitle

\begin{abstract}
  Counterfactual fairness requires that a person would have been classified in the same way by an AI or other algorithmic system if they had a different protected class, such as a different race or gender. This is an intuitive standard, as reflected in the U.S. legal system, but its use is limited because counterfactuals cannot be directly observed in real-world data. On the other hand, group fairness metrics (e.g., demographic parity or equalized odds) are less intuitive but more readily observed. In this paper, we use \textit{causal context} to bridge the gaps between counterfactual fairness, robust prediction, and group fairness. First, we motivate counterfactual fairness by showing that there is not necessarily a fundamental trade-off between fairness and accuracy because, under plausible conditions, the counterfactually fair predictor is in fact accuracy-optimal in an unbiased target distribution. Second, we develop a correspondence between the causal graph of the data-generating process and which, if any, group fairness metrics are equivalent to counterfactual fairness. Third, we show that in three common fairness contexts—measurement error, selection on label, and selection on predictors—counterfactual fairness is equivalent to demographic parity, equalized odds, and calibration, respectively. Counterfactual fairness can sometimes be tested by measuring relatively simple group fairness metrics.
\end{abstract}

\section{Introduction}

The increasing use of artificial intelligence and machine learning in high stakes contexts such as healthcare, hiring, and financial lending has driven widespread interest in algorithmic fairness. A canonical example is risk assessment in the U.S. judicial system, which became well-known after a 2016 investigation into the recidivism prediction tool COMPAS revealed significant racial disparities \cite{angwin16}. There are several metrics one can use to operationalize fairness. These typically refer to a protected class or sensitive label, such as race or gender, and define an equality of prediction rates across the protected class. For example, demographic parity, also known as statistical parity or group parity, requires equal classification rates across all levels of the protected class \cite{calders09}.

However, there is an open challenge of deciding which metrics to enforce in a given context \cite[e.g.,][]{hsu22, srivastava19}. Appropriateness can vary based on different ways to measure false positive and false negative rates and the costs of such errors \cite{flores16} as well as the potential trade-offs between fairness and accuracy \cite{corbett-davies17, corbett-davies18, ge22, zhao20}. Moreover, there are well-known technical results showing that it is impossible to achieve the different fairness metrics simultaneously, even to an $\epsilon$-approximation \cite{kleinberg16, chouldechova17}, which suggest that a practitioner's context-specific perspective may be necessary to achieve satisfactory outcomes \cite{bell23}.

A different paradigm, counterfactual fairness, requires that a prediction would have been the same if the person had a different protected class \cite{kusner17}. This matches common intuitions and legal standards \cite{issakohler-hausmann19}. For example, in \textit{McCleskey v. Kemp} (1987), the Supreme Court found "disproportionate impact" as shown by group disparity to be insufficient evidence of discrimination and required evidence that "racial considerations played a part." The Court found that there was no merit to the argument that the Georgia legislature had violated the Equal Protection Clause via its use of capital punishment on the grounds that it had not been shown that this was [sic] "\textit{because} of an anticipated racially discriminatory effect." While judges do not formally specify causal models, their language usually evokes such counterfactuals; see, for example, the test of "but-for causation" in \textit{Bostock v. Clayton County} (2020).
 
\begin{figure}
    \centering
    \makebox[\textwidth][c]{
    \begin{tikzpicture}
        \node[text width=3cm,align=center,font=\bfseries] (tools) at (0,0) {Tools to Detect \\ and Enforce \\ Group \\ Fairness};
        \node (tools1) at (-.1,-2) {$\bullet$ Data augmentation};
        \node[below=of tools1.west,anchor=west] (tools2) {$\bullet$ Data generation};
        \node[below=of tools2.west,anchor=west] (tools3) {$\bullet$ Regularization};
        \node[below=of tools3.west,anchor=west] (tools4) {$\bullet$ Etc.};
        \node[draw,shape=rectangle,thick,minimum width=3.5cm,minimum height=4cm] (toolbox) at (0,-3.5) {};
    
        \node[text width=3cm,align=center,font=\bfseries] (metrics) at (4,0) {Group \\ Fairness Metrics};
        \node (metrics1) at (4,-2) {$\bullet$ Demographic parity};
        \node[below=of metrics1.west,anchor=west] (metrics2) {$\bullet$ Equalized odds};
        \node[below=of metrics2.west,anchor=west] (metrics3) {$\bullet$ Calibration};
        \node[below=of metrics3.west,anchor=west] (metrics4) {$\bullet$ Etc.};
        \node[draw,shape=rectangle,thick,minimum width=3.5cm,minimum height=4cm] (metricsbox) at (4,-3.5) {};
    
        \node[text width=3cm,align=center,font=\bfseries] (corr) at (8,0) {Causal \\ Context \\ Connection};
        \node (corr1) at (7.4,-2) {$\bullet$ \Cref{thm:cfcorrespondence}};
        \node[below=of corr1.west,anchor=west] (corr2) {$\bullet$ \Cref{cor:cfcorrespondenceextension}};
        \node[below=of corr2.west,anchor=west] (corr3) {$\bullet$ \Cref{cor:cfcorrespondenceexamples}};
        \node[draw,dashed,shape=rectangle,thick,minimum width=3.5cm,minimum height=4cm] (corrbox) at (8,-3.5) {};
    
        \node[text width=3cm,align=center,font=\bfseries] (cf) at (12,0) {Counterfactual Fairness};
        \node (cf1) at (11.9,-2) {$\bullet$ Ethics / philosophy};
        \node[below=of cf1.west,anchor=west] (cf2) {$\bullet$ Public policy};
        \node[below=of cf2.west,anchor=west] (cf3) {$\bullet$ Law};
        \node[below=of cf3.west,anchor=west] (cf4) {$\bullet$ Etc.};
        \node[draw,shape=rectangle,thick,minimum width=3.5cm,minimum height=4cm] (cfbox) at (12,-3.5) {};
    
        \draw [-{>[length=3mm,width=3mm]},black,line width=1pt] (toolbox) -- (metricsbox);
    
        \draw [-{>[length=3mm,width=3mm]},black,line width=1pt] (metricsbox) -- (corrbox);
    
        \draw [-{>[length=3mm,width=3mm]},black,line width=1pt] (corrbox) -- (cfbox);
    \end{tikzpicture}
    }
    \caption{A pipeline to detect and enforce counterfactual fairness. This is facilitated by \Cref{thm:cfcorrespondence}, a correspondence between group fairness metrics and counterfactual fairness, such that tools to detect and enforce group fairness can be applied to counterfactual fairness.}
    \label{fig:pipeline}
\end{figure}
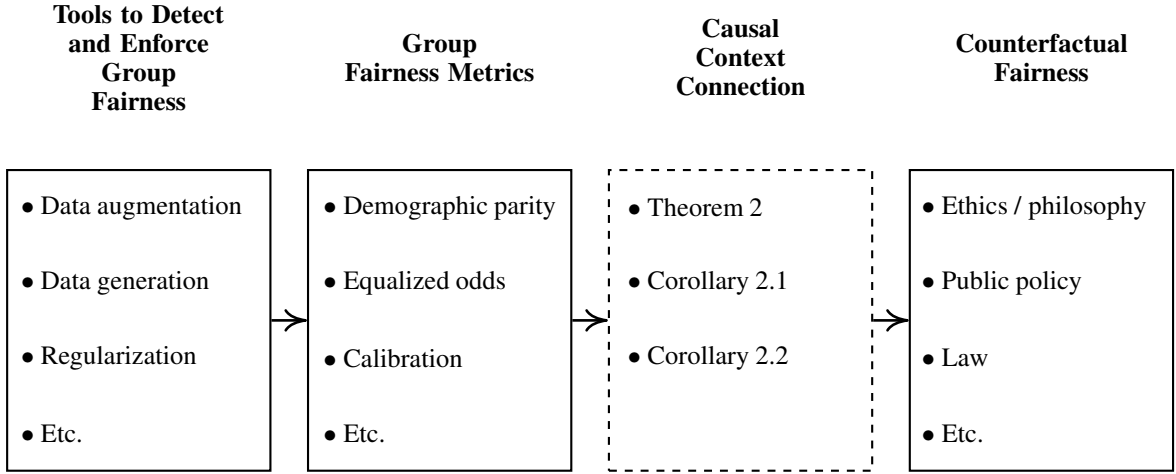

We take a new perspective on counterfactual fairness by leveraging \textit{causal context}. First, we provide a novel motivation for counterfactual fairness from the perspective of robust prediction by showing that, with certain causal structures of the data-generating process, counterfactually fair predictors are accuracy-optimal in an unbiased target distribution (\Cref{thm:minimalrisk}). The contexts we consider ihave two important features: the faithfulness of the causal structure, in the sense that no causal effects happen to be precisely counterbalanced, which could lead to a coincidental achievement of group fairness metrics, and that the association between the label $Y$ and the protected class $A$ is "purely spurious," in the sense that intervening on $A$ does not affect $Y$ or vice versa.

Second, we address the fundamental challenge in enforcing counterfactual fairness, which is that, by definition, we never directly observe counterfactual information in the real world. To deal with this, we show that the causal context connects counterfactual fairness to observational group fairness metrics (\Cref{thm:cfcorrespondence} and \Cref{cor:cfcorrespondenceextension}), and this correspondence can be used to apply tools from group fairness frameworks to the ideal of achieving counterfactual fairness (\Cref{fig:pipeline}). For example, the fairness literature has developed techniques to achieve group fairness through augmenting the input data \cite{creager19, jang21, sharma20}, data generation \cite{ahuja21a}, or regularization \cite{stefano20}. With this pipeline, these tools can be applied to enforce counterfactual fairness by achieving the specific group fairness metric that corresponds to counterfactual fairness in the given context. In particular, we show that in each fairness context shown in \Cref{fig:exdags}, counterfactual fairness is equivalent to a particular metric (\Cref{cor:cfcorrespondenceexamples}). This correspondence can be used to apply tools built for group fairness to the goal of counterfactual fairness or vice versa.

Finally, we conduct brief experiments in a semi-synthetic setting with the Adult income dataset \cite{becker96} to confirm that a counterfactually fair predictor under these conditions achieves out-of-distribution accuracy and the corresponding group fairness metric. To do this, we develop a novel counterfactually fair predictor that is a weighted average of naive predictors, each under the assumption that the observation is in each protected class (\Cref{thm:cfpredictor}).

\begin{minipage}[]{1 \textwidth}
    In summary, we make three main contributions:
    
    \begin{enumerate}
        \item We provide a novel motivation for counterfactual fairness from the perspective of robust prediction by showing that, in certain causal contexts, the counterfactually fair predictor is accuracy-optimal in an unbiased target distribution (\Cref{thm:minimalrisk}).
        \item We provide criteria for whether a causal context implies equivalence between counterfactual fairness and each of the three primary group fairness metrics (\Cref{thm:cfcorrespondence}) as well as seven derivative metrics (\Cref{cor:cfcorrespondenceextension}).
        \item We provide three causal contexts, as shown in \Cref{fig:exdags}, in which counterfactual fairness is equivalent to a particular group fairness metric: measurement error with demographic parity, selection on label with equalized odds, and selection on predictors with calibration (\Cref{cor:cfcorrespondenceexamples}).
    \end{enumerate}
\end{minipage}

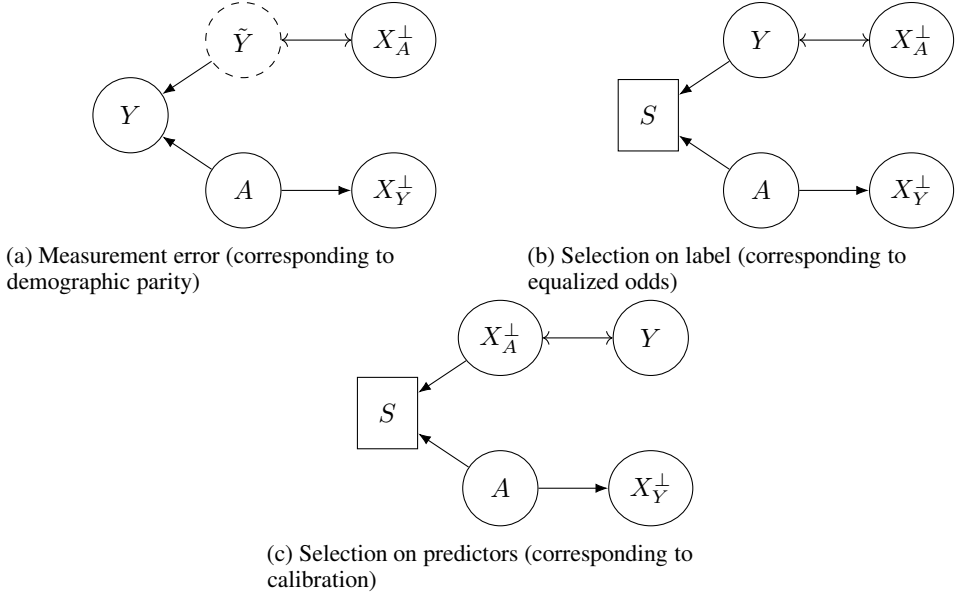
\begin{figure}[t]
    \centering
    \begin{subfigure}[b]{.49\textwidth}
        \centering
        \begin{tikzpicture}
            \node[state] (xperpa) at (0,0) {$X^\perp_A$};
            \node[dashed_state] (tildey) at (-2,0) {$\tilde{Y}$};
            \node[state] (y) at (-3.5,-1) {$Y$};
            \node[state] (a) at (-2,-2) {$A$}; 
            \node[state] (xperpy) at (0,-2) {$X^\perp_Y$};
            
            \path[<->] (xperpa) edge (tildey);
            \path (tildey) edge (y);
            \path (a) edge (y);
            \path (a) edge (xperpy);
        \end{tikzpicture}
        \subcaption{Measurement error (corresponding to \\ demographic parity)}
        \label{fig:exdags_me}
    \end{subfigure}
    \begin{subfigure}[b]{.49\textwidth}
        \centering
        \begin{tikzpicture}
            \node[state] (xperpa) at (0,0) {$X^\perp_A$};       
            \node[state] (y) at (-2,0) {$Y$};
            \node[box_state] (s) at (-3.5,-1) {$S$};
            \node[state] (a) at (-2,-2) {$A$};
            \node[state] (xperpy) at (0,-2) {$X^\perp_Y$};

            \path[<->] (y) edge  (xperpa);
            \path (y) edge (s);
            \path (a) edge (s);
            \path (a) edge (xperpy);
        \end{tikzpicture}
        \subcaption{Selection on label (corresponding to \\ equalized odds)}
        \label{fig:exdags_so}
    \end{subfigure}
    \begin{subfigure}[b]{.49\textwidth}
        \centering
        \begin{tikzpicture}
            \node[state] (xperpa) at (-2,0) {$X^\perp_A$};
            \node[state] (y) at (0,0) {$Y$};
            \node[box_state] (s) at (-3.5,-1) {$S$};
            \node[state] (a) at (-2,-2) {$A$};
            \node[state] (xperpy) at (0,-2) {$X^\perp_Y$};

            \path[<->] (xperpa) edge  (y);
            \path (xperpa) edge (s);
            \path (a) edge (s);
            \path (a) edge (xperpy);
        \end{tikzpicture}
        \subcaption{Selection on predictors (corresponding to \\ calibration)}
    \label{fig:exdags_sp}
  \end{subfigure}
  \caption{DAGs for three causal contexts in which a counterfactually fair predictor is equivalent to a particular group fairness metric. Measurement error is represented with the unobserved true label $\tilde{Y}$ in a dashed circle, which is a parent of the observed label $Y$, and selection is represented with the selection status $S$ in a rectangle, indicating an observed variable that is conditioned on in the data-generating process, which induces an association between its parents. Bidirectional arrows indicate that either variable could affect the other.}
\label{fig:exdags}
\end{figure}

For a running example, consider the case of recidivism prediction with measurement error, as shown in \cref{fig:exdags_me}, where $X$ is the set of predictors of recidivism, such as past convictions; $\tilde{Y}$ is committing a crime; $Y$ is committing a crime that has been reported and prosecuted; and $A$ is race, which in this example affects the likelihood crime is reported and prosecuted but not the likelihood of crime itself. In this case, we show that a predictor is counterfactually fair if and only if it achieves demographic parity.

\section{Related work}

Previous work has developed a number of fairness metrics, particularly demographic parity \cite{calders09}, equalized odds \cite{hardt16}, and calibration \cite{flores16}. \citet{verma18} details 15 group fairness metrics—also known as observational or distributional fairness metrics—including these three, and \citet{makhlouf22} details 19 causal fairness metrics, including counterfactual fairness \cite{kusner17}. Many notions of fairness can be viewed as specific instances of the general goal of invariant representation \cite{edwards16, zhang18, zhao20, zhao22}. Much contemporary work in algorithmic fairness develops ways to better enforce certain fairness metrics, such as through reprogramming a pretrained model \cite{zhang22}, contrastive learning when most data lacks labels of the protected class \cite{chai22}, and differentiating between critical and non-critical features for selective removal \cite{dutta20a}.

Several papers have criticized the group fairness metrics and enriched them correspondingly. For example, one can achieve demographic parity by arbitrarily classifying a subset of individuals in a protected class, regardless of the other characteristics of those individuals. This is particularly important for subgroup fairness, in which a predictor can achieve the fairness metric for one category within the protected class but fail to achieve it when subgroups are considered across different protected classes, such as people who have a certain race and gender. This is particularly important given the intersectional nature of social disparities \cite{crenshaw89}, and metrics have been modified to include possible subgroups \cite{kearns18}.

Well-known impossibility theorems show that fairness metrics such as the three listed above cannot be simultaneously enforced outside of unusual situations such as a perfectly accurate classifier, a random classifier, or a population with equal rates across the protected class, and this is true even to an $\epsilon$-approximation \cite{chouldechova17, kleinberg16, pleiss17}. Moreover, insofar as fairness and classification accuracy diverge, there appears to be a trade-off between the two objectives. Two well-known papers in the literature, \citet{corbett-davies17} and \citet{corbett-davies18}, detail the cost of fairness and argue for the prioritization of classification accuracy. More recent work on this trade-off includes a geometric characterization and Pareto optimal frontier for invariance goals such as fairness \cite{zhao20}, the cost of causal fairness in terms of Pareto dominance in classification accuracy and diversity \cite{nilforoshan22}, the potential for increases in fairness via data reweighting in some cases without a cost to classification accuracy \cite{li22}, and showing optimal fairness and accuracy in an ideal distribution given mismatched distributional data \cite{dutta20}.

Our project uses the framework of causality to enrich fairness metric selection. Recent work has used structural causal models to formalize and solve a wide range of causal problems in machine learning, such as disentangled representations \cite{scholkopf21, wang21}, out-of-distribution generalization \cite{scholkopf21, wald21, wang21}, and the identification and removal of spurious correlations \cite{veitch21, wang21}. While \citet{veitch21} focuses on the correspondence between counterfactual invariance and domain generalization, Remark 3.3 notes that counterfactual fairness as an instance of counterfactual invariance can imply either demographic parity or equalized odds in the two graphs they analyze, respectively, and \citet{makar23} draw the correspondence between risk invariance and equalized odds under a certain graph. The present work can be viewed as a generalization of these results to correspondence between counterfactual fairness, which is equivalent to risk invariance when the association is "purely spurious," and each of the group fairness metrics.

\section{Preliminaries and Examples}
\label{sec:preliminaries}

For simplicity of exposition, we focus on binary classification with an input dataset $X \in \mathcal{X}$ and a binary label $Y \in \mathcal{Y} = \{0,1\}$, and in our data we may have ground-truth unobserved labels $\tilde{Y} \in \{0,1\}$ that do not always match the potentially noisy observed labels $Y$. The goal is to estimate a prediction function $f : \mathcal{X} \mapsto \mathcal{Y}$, producing estimated labels $f(X) \in \{0,1\}$, that minimizes $\mathbb{E}[\ell(f(X), Y)]$ for some loss function $\ell: \mathcal{Y} \times \mathcal{Y} \mapsto \mathbb{R}$. There are a number of popular fairness metrics in the literature that can be formulated as conditional probabilities for some protected class $A \in \{0,1\}$, or equivalently, as independence of random variables, and we can define a notion of counterfactual fairness.

\begin{defn}
    \label{defn:dp}
    (Demographic parity). A predictor $f(X)$ achieves demographic parity if and only if:
    $$
        \mathbb{P}(f(X)=1 \mid A=0) = \mathbb{P}(f(X)=1 \mid A=1) \iff f(X) \perp A
    $$
\end{defn}

\begin{defn}
    \label{defn:eo}
    (Equalized odds). A predictor $f(X)$ achieves equalized odds if and only if:
    $$
        \mathbb{P}(f(X)=1 \mid A=0,Y=y) = \mathbb{P}(f(X)=1 \mid A=1,Y=y) \iff f(X) \perp A \mid Y
    $$
\end{defn}

\begin{defn}
    \label{defn:c}
    (Calibration). A predictor $f(X)$ achieves calibration if and only if:
    $$
        \mathbb{P}(Y=1 \mid A=0,f(X)=1) = \mathbb{P}(Y=1 \mid A=1,f(X)=1) \iff Y \perp A \mid f(X)
    $$
\end{defn}

\begin{defn}
    \label{defn:cf}
    (Counterfactual fairness). A predictor $f(X)$ achieves counterfactual fairness if and only if for any $a, a' \in A$:
    $$
        f(X(a))=f(X(a'))
    $$
\end{defn}

In this case, $x(0)$ and $x(1)$ are the potential outcomes. That is, for an individual associated with data $x \in X$ with protected class $a \in A$, their counterfactual is $x(a')$ where $a' \in A$ and $a \neq a'$. The counterfactual observation is what would obtain if the individual had a different protected class. We represent causal structures as directed acyclic graphs (DAGs) in which nodes are variables and directed arrows indicates the direction of causal effect. Nodes in a solid circle are observed variables that may or may not be included in a predictive model. A dashed circle indicates an unobserved variable, such as in the case of measurement error in which the true label $\tilde{Y}$ is a parent of the observed label $Y$. A rectangle indicates an observed variable that is conditioned on in the data-generating process, typically denoted by $S$ for a selection effect, which induces an association between its parents. For clarity, we decompose $X$ into $X^\perp_A$, the component that is not causally affected by $A$, and $X^\perp_Y$, the component that does not causally affect $Y$ (in a causal-direction graph, i.e., $X$ affects $Y$) or is not causally affected by $Y$ (in an anti-causal graph, i.e., $Y$ affects $X$). In general, there could be a third component that is causally connected to both $A$ and $Y$ (i.e., a non-spurious interaction effect). This third component is excluded through the assumption that the association between $Y$ and $A$ in the training distribution is \textit{purely spurious} \cite{veitch21}.

\begin{defn}
    \label{defn:ps}
    (Purely spurious). We say the association between $Y$ and $A$ is purely spurious if $Y \perp X \mid X^\perp_A, A$.
\end{defn}

In other words, if we condition on $A$, then the only information about $Y$ is in $X^\perp_A$ (i.e., the component of the input that is not causally affected by the protected class). We also restrict ourselves to cases in which $A$ is exogenous to the causal structure (i.e., there is no confounder of $A$ and $X$ or of $A$ and $Y$), which seems reasonable in the case of fairness because the protected class is typically not caused by other variables in the specified model.

Despite the intuitive appeal and legal precedent for counterfactual fairness, the determination of counterfactuals is fundamentally contentious because, by definition, we do not observe counterfactual worlds, and we cannot run scientific experiments to test what would have happened in a different world history. Some counterfactuals are relatively clear, but counterfactuals in contexts of protected classes such as race, gender, and disability can be ambiguous. Consider Bob, a hypothetical Black criminal defendant being assessed for recidivism who is determined to be high-risk by a human judge or a machine algorithm. The assessment involves extensive information about his life: where he has lived, how he has been employed, what crimes he has been arrested and convicted of before, and so on. What is the counterfactual in which Bob is White? Is it the world in which Bob was born to White parents, the one in which Bob's parents were themselves born to White parents, or another change further back? Would Bob still have the same educational and economic circumstances? Would those White parents have raised Bob in the same majority-Black neighborhood he grew up in or in a majority-White neighborhood? Questions of counterfactual ambiguity do not have established answers in the fairness literature, either in philosophy \cite{marcellesi13}, epidemiology \cite{vanderweele14}, or the nascent machine learning literature \cite{kasirzadeh21}, and we do not attempt to resolve them in the present work.

Additionally, defining counterfactual fairness in a given context requires some identification of the causal structure of the data generating process. Fortunately, mitigating this challenge is more technically tractable than mitigating ambiguity, as there is an extensive literature on how we can learn about the causal structure, known as as causal inference \cite{cunningham21} or causal discovery when we have minimal prior knowledge or assumptions of which parent-child relationships do and do not obtain \cite{spirtes16}. The literature also contains a number of methods for assessing counterfactual fairness given a partial or complete causal graph \cite{chiappa19, galhotra22, wu19, wu19a, zuo22}. Again, we do not attempt to resolve debates about appropriate causal inference strategies in the present work, but merely to provide a conceptual tool for those researchers and practitioners who are comfortable staking a claim of some partial or complete causal structure of the context at hand.

To ground our technical results, we briefly sketch three fairness contexts that match those in \Cref{fig:exdags} and will be the basis of \Cref{cor:cfcorrespondenceexamples}. First, measurement error \cite{bland96} is a well-known source of fairness issues as developed in the "measurement error models" of \citet{jacobs21}. Suppose that COMPAS is assessing the risk of recidivism, but they do not have perfect knowledge of who has committed crimes because not all crimes are reported and prosecuted. Thus, $X$ causes $\tilde{Y}$, the actual committing of a crime, which is one cause of $Y$, the imperfect labels. Also suppose that the protected class $A$ affects whether the crime is reported and prosecuted, such as through police bias, but does not affect the actual committing of a crime. $A$ also affects $X$, other data used for the recidivism prediction. This is represented by the causal DAG in \cref{fig:exdags_me}, which connects counterfactual fairness to demographic parity.

Second, a protected class can affect whether individuals are selected into the dataset. For example, drugs may be prescribed on the basis of whether the individual's lab work $X$ indicates the presence of disease $Y$. People with the disease are presumably more likely to have lab work done, and the socioeconomic status $A$ of the person (or of the hospital where their lab work is done) may make them more likely to have lab work done. We represent this with a selection variable $S$ that we condition on by only using the lab work data that is available. This collider induces an association between $A$ and $Y$. This is represented by the causal DAG in \cref{fig:exdags_so}, which connects counterfactual fairness to equalized odds.

Third, individuals may be selected into the dataset based on predictors, $X$, which cause the label $Y$. For example, loans may be given out with a prediction of loan repayment $Y$ based on demographic and financial records $X$. There may be data only for people who choose to work with a certain bank based on financial records and a protected class $A$. This is represented by the causal DAG in \cref{fig:exdags_sp}, which connects counterfactual fairness to calibration.

\section{Counterfactual fairness and robust prediction}

Characterizations of the trade-off between prediction accuracy and fairness treat them as two competing goals \cite[e.g.,][]{corbett-davies17, corbett-davies18, ge22, zhao20}. That view assumes the task is to find an optimal model $f^*(X)$ that minimizes risk (i.e., expected loss) in the training distribution $X,Y,A \sim P$:
\begin{align*}
    f^*(X) = \underset{f}{\operatorname{argmin}} \ \mathbb{E}_{P}[\ell(f(X), Y)]
\end{align*}
However, the discrepancy between levels of the protected class in the training data may itself be due to biases in the dataset, such as measurement error, selection on label, and selection on predictors. As such, the practical interest may not be risk minimization in the training distribution, but out-of-distribution (OOD) generalization from the training distribution to an unbiased target distribution where these effects are not present.

Whether the counterfactually fair empirical risk minimizer also minimizes risk in the target distribution depends on how the distributions differ. Because a counterfactually fair predictor does not depend on the protected class, it minimizes risk if the protected class in the target distribution contains no information about the corresponding labels (i.e., if protected class and label are uncorrelated). If the protected class and label are correlated in the target distribution, then risk depends on whether the counterfactually fair predictor learns all the information about the label contained in the protected class. If so, then its prediction has no reason to vary in the counterfactual scenario. \Cref{thm:minimalrisk} motivates counterfactual fairness by stating that, for a distribution with bias due to selection on label and equal marginal label distributions or due to selection on predictors, the counterfactually fair predictor is accuracy-optimal in the unbiased target distribution.

\begin{minipage}[]{1 \textwidth}
    \begin{restatable}{theorem}{minimalrisk}
    \label{thm:minimalrisk}
        Let $\mathcal{F}^{\textrm{CF}}$ be the set of all counterfactually fair predictors. Let $\ell$ be a proper scoring rule (e.g., square error, cross entropy loss). Let the counterfactually fair predictor that minimizes risk on the training distribution $X,Y,A \sim P$ be:
        $$
            f^*(X) := \underset{f \in \mathcal{F}^{\textrm{CF}}}{\operatorname{argmin}} \ \mathbb{E}_{P}[\ell(f(X), Y)]
        $$
        Then, $f^*$ also minimizes risk on the target distribution $X,Y,A \sim Q$ with no selection effects, i.e.,
        $$
            f^*(X) = \underset{f}{\operatorname{argmin}} \ \mathbb{E}_{Q}[\ell(f(X), Y)]
        $$   
        if either of the following conditions hold:
        \begin{enumerate}
            \item The association between $Y$ and $A$ is due to selection on label and the marginal distribution of the label $Y$ is the same in each distribution, i.e., $P(Y) = Q(Y)$.
            \item The association between $Y$ and $A$ is due to selection on predictors.
        \end{enumerate}
    \end{restatable}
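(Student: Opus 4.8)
The plan is to reduce the statement to a single identity between two conditional means and to lean on the fact that a proper scoring rule is minimized by the conditional expectation. First I would pin down $\mathcal{F}^{\textrm{CF}}$ using the standing faithfulness and purely-spurious assumptions: in each DAG of \cref{fig:exdags}, $X^\perp_A$ is not a descendant of $A$ while $X^\perp_Y$ is, so $f(X(a)) = f(X^\perp_A, X^\perp_Y(a))$, and faithfulness forces $X^\perp_Y(a)$ to vary nontrivially with $a$; hence $f \in \mathcal{F}^{\textrm{CF}}$ iff $f$ is (almost surely) a function of $X^\perp_A$ alone. Since $\ell$ is a proper scoring rule, minimizing $\mathbb{E}_P[\ell(f(X),Y)]$ over this class yields $f^*(X) = \mathbb{E}_P[Y \mid X^\perp_A]$, while minimizing the unconstrained $Q$-risk yields the Bayes predictor $\mathbb{E}_Q[Y \mid X]$. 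So it suffices to prove $\mathbb{E}_P[Y \mid X^\perp_A] = \mathbb{E}_Q[Y \mid X]$.

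I would split this into $\mathbb{E}_Q[Y \mid X] = \mathbb{E}_Q[Y \mid X^\perp_A]$ and $\mathbb{E}_Q[Y \mid X^\perp_A] = \mathbb{E}_P[Y \mid X^\perp_A]$. The first holds in both contexts: in the target DAG (i.e., dropping $S$), $A$ is exogenous and $X^\perp_Y$ is a child of $A$ only, so $\{Y,X^\perp_A\} \perp \{A,X^\perp_Y\}$ and in particular $Y \perp X^\perp_Y \mid X^\perp_A$ — equivalently, the $Q$-Bayes predictor is already counterfactually fair. The second equality is where the two cases diverge, and I expect it to be the main obstacle. Taking $P$ to be the law of the $S$-conditioned population, sharing the stable mechanism relating $Y$ and $X^\perp_A$ with $Q$: in selection on predictors (\cref{fig:exdags_sp}) every path from $Y$ to $S$ passes through the non-collider $X^\perp_A$, so $Y \perp S \mid X^\perp_A$ and hence $P(Y \mid X^\perp_A) = Q(Y \mid X^\perp_A)$ with no further assumption — this gives condition 2. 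In selection on label (\cref{fig:exdags_so}), $S$ is a child of $Y$, so conditioning on $S$ does shift the conditional law of $Y$; but every path from $X^\perp_A$ to $S$ passes through $Y$, so $X^\perp_A \perp S \mid Y$ and the ``reverse'' mechanism $P(X^\perp_A \mid Y)$ is unchanged. A Bayes-rule computation then shows $P(Y \mid X^\perp_A)$ and $Q(Y \mid X^\perp_A)$ coincide precisely when the label marginals agree, i.e. under $P(Y) = Q(Y)$ — this gives condition 1 and explains why the extra hypothesis is needed there but not in condition 2. Chaining the equalities gives $f^*(X) = \mathbb{E}_P[Y \mid X^\perp_A] = \mathbb{E}_Q[Y \mid X^\perp_A] = \mathbb{E}_Q[Y \mid X]$, so $f^*$ minimizes the $Q$-risk.

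The steps needing the most care are: (i) the characterization of $\mathcal{F}^{\textrm{CF}}$, which requires faithfulness together with a support/overlap condition so that ``constant along the $A$-intervention'' genuinely means ``independent of $X^\perp_Y$''; (ii) being explicit about what is held fixed across $P$ and $Q$ — the causal/anti-causal mechanism linking $Y$ and $X^\perp_A$, plus overlap of the $X^\perp_A$-supports — so that the d-separation arguments transfer from a single SCM to the pair $(P,Q)$; and (iii) reading $f$ as a probabilistic predictor so the proper-scoring-rule step applies, matching the loss hypothesis. The genuinely delicate conceptual point is the asymmetry just described: selecting on a descendant of $Y$ rescales $\mathbb{E}[Y \mid X^\perp_A]$ in a way only the label-marginal condition undoes, whereas selecting on $X^\perp_A$ itself — which the predictor already conditions on — is innocuous.
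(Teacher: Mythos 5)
Your proposal is correct and follows essentially the same route as the paper: reduce both risk minimizers to conditional expectations via the proper scoring rule, show $\mathbb{E}_Q[Y\mid X]=\mathbb{E}_Q[Y\mid X^\perp_A]$ from the unselected target graph, and then establish $P(Y\mid X^\perp_A)=Q(Y\mid X^\perp_A)$ by a Bayes-rule argument with the invariant mechanism $P(X^\perp_A\mid Y)$ plus $P(Y)=Q(Y)$ for selection on label, and directly from $Y\perp S\mid X^\perp_A$ for selection on predictors. Your d-separation justification of the invariance of $P(X^\perp_A\mid Y)$ under selection on label is in fact slightly more careful than the paper's appeal to ``$X$ causes $Y$.''
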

\end{minipage}

In the context of measurement error, there are not straightforward conditions for the risk minimization of the counterfactually fair predictor because the training dataset contains $Y$, observed noisy labels, and not $\tilde{Y}$, the true unobserved labels. Thus any risk minimization (including in the training distribution) depends on the causal process that generates $Y$ from $\tilde{Y}$.

\section{Counterfactual fairness and group fairness}

Causal structures imply conditional independencies. For example, if the only causal relationships are that $X$ causes $Y$ and $Y$ causes $Z$, then we know that $X \perp Z \mid Y$. On a directed acyclic graph (DAG), following \citet{pearl89}, we say that a variable $Y$ \textbf{dependence-separates} or \textbf{d-separates} $X$ and $Z$ if $X$ and $Z$ are connected via an unblocked path (i.e., no unconditioned \textbf{collider} in which two arrows point directly to the same variable) but are no longer connected after removing all arrows that directly connect to $Y$ \cite{glymour08}. So in this example, $Y$ d-separates $X$ and $Z$, which is equivalent to the conditional independence statement. A selection variable $S$ in a rectangle indicates an observed variable that is conditioned on in the data-generating process, which induces an association between its parents and thereby does not block the path as an unconditioned collider would. For \Cref{thm:cfcorrespondence}, \Cref{cor:cfcorrespondenceextension}, and \Cref{cor:cfcorrespondenceexamples}, we make the usual assumption of faithfulness of the causal graph, meaning that the only conditional independencies are those implied by d-separation, rather than any causal effects that happen to be precisely counterbalanced. Specifically, we assume faithfulness between the protected class $A$, the label $Y$, and the component of $X$ on which the predictor $f(X)$ is based. If the component is not already its own node in the causal graph, then faithfulness would apply if the component were isolated into its own node or nodes.

For a predictor, these implied conditional independencies can be group fairness metrics. We can restate conditional independencies containing $X^\perp_A$ as containing $f(X)$ because, if $f(X)$ is a counterfactually fair predictor, it only depends on $X^\perp_A$, the component that is not causally affected by $A$. In \Cref{thm:cfcorrespondence}, we provide the correspondence between counterfactual fairness and the three most common metrics: demographic parity, equalized odds, and calibration.

\begin{restatable}{theorem}{cfcorrespondence}
\label{thm:cfcorrespondence}
Let the causal structure be a causal DAG with $X^\perp_Y$, $X^\perp_A$, $Y$, and $A$, such as in \cref{fig:exdags}. Assume faithfulness between $A$, $Y$, and $f(X)$. Then:
\begin{enumerate}
    \item Counterfactual fairness is equivalent to demographic parity if and only if there is no unblocked path between $X^\perp_A$ and $A$.
    \item Counterfactual fairness is equivalent to equalized odds if and only if all paths between $X^\perp_A$ and $A$, if any, are either blocked by a variable other than $Y$ or unblocked and contain $Y$.
    \item Counterfactual fairness is equivalent to calibration if and only if all paths between $Y$ and $A$, if any, are either blocked by a variable other than $X^\perp_A$ or unblocked and contain $X^\perp_A$.
\end{enumerate}
\end{restatable}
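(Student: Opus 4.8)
The plan is to reduce each of the three equivalences to a d-separation statement in the DAG. The starting point is the characterization of counterfactual invariance underlying \citet{veitch21}: in each graph considered, the relevant component of $X$ is either purely a cause or purely an effect of $Y$ and, by the purely-spurious assumption, $Y \perp X \mid X^\perp_A, A$, so a predictor $f(X)$ is counterfactually fair if and only if it is almost surely a function of $X^\perp_A$ alone. Using this, together with the remark that a conditional-independence statement about $X^\perp_A$ transfers to one about $f(X)$, one sees that the counterfactually fair predictors all achieve demographic parity exactly when $X^\perp_A \perp A$, all achieve equalized odds exactly when $X^\perp_A \perp A \mid Y$, and all achieve calibration exactly when $Y \perp A \mid X^\perp_A$ (here with $f(X)$, not $X^\perp_A$, in the conditioning set — the delicate point, addressed below).

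Next I would verify that the three bracketed graphical conditions are precisely the d-separation conditions for these independences. For claim 1, ``no unblocked path between $X^\perp_A$ and $A$'' is by definition ``$X^\perp_A$ and $A$ are d-separated by $\emptyset$.'' For claim 2, a path between $X^\perp_A$ and $A$ is blocked given $\{Y\}$ iff it is already blocked without conditioning on $Y$ — i.e.\ it contains an unconditioned collider that is neither $Y$ nor an ancestor of $Y$ — or it passes through $Y$ as a non-collider, which is exactly the stated condition; claim 3 is the mirror image with $X^\perp_A$ and $Y$ interchanged. Some care is needed with the bidirected edges, which stand for a latent common cause and so contribute paths through an implicit latent node, and with the selection node $S$, which is always in the conditioning set and hence never blocks a path the way an unconditioned collider would. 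Granting this identification, each biconditional splits in two. If the graphical condition holds, then by soundness of d-separation the independence holds in every compatible distribution, so every counterfactually fair $f$ — being a function of $X^\perp_A$ — satisfies the metric; and if a predictor $f$ satisfies the metric, it must be counterfactually fair, which is where the hypothesis of faithfulness between $A$, $Y$, and the component of $X$ that $f$ is based on enters: a predictor depending on a part of $X$ that is d-connected to $A$ in the relevant conditional sense would, by faithfulness, be dependent on $A$ in that sense and hence violate the metric. Conversely, if the graphical condition fails, faithfulness produces a compatible distribution violating the independence, and hence — after passing to a suitable binary indicator — a counterfactually fair predictor violating the metric, so the two notions are not equivalent.

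The step I expect to be the main obstacle is the calibration case. Demographic parity and equalized odds condition on a superset of the variables appearing in the graphical independence, so passing from $X^\perp_A$ to $f(X)$ is harmless; calibration instead conditions on $f(X)$ itself, and in general $Y \perp A \mid X^\perp_A$ neither implies nor is implied by $Y \perp A \mid g(X^\perp_A)$ for a coarsening $g$, so both directions of the ``restate $X^\perp_A$ as $f(X)$'' move need justification, and the ``suitable binary indicator'' used in the converse direction must be chosen to preserve the conditional dependence. I would handle this by restricting to predictors that are sufficient statistics for $Y$ given $X^\perp_A$ (such as $\mathbb{E}[Y \mid X^\perp_A]$, or an invertible reparametrization of it) or by invoking the faithfulness-of-the-relevant-component assumption to exclude the degenerate coarsenings; pinning this down, together with the ``metric implies counterfactual fairness'' direction, is the part of the proof that needs the most care.
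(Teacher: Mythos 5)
Your proposal takes essentially the same route as the paper's proof: reduce counterfactual fairness to the statement that $f$ depends only on $X^\perp_A$, rewrite each metric as a conditional independence involving $X^\perp_A$ (or $Y$), identify the bracketed graphical conditions with the corresponding d-separation statements, and invoke faithfulness for the converse direction. The one substantive difference is that you flag the calibration case as delicate, and you are right to: the paper's proof simply asserts that conditional independencies containing $X^\perp_A$ may be restated with $f(X)$ because a counterfactually fair predictor depends only on $X^\perp_A$, which is harmless for demographic parity and equalized odds (where $f(X)$ appears on the left of the independence, so measurability suffices) but not for calibration, where $f(X)$ sits in the conditioning set and $Y \perp A \mid X^\perp_A$ does not in general transfer to $Y \perp A \mid g(X^\perp_A)$ for a coarsening $g$. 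Your proposed remedies—restricting to predictors that are sufficient for $Y$ given $X^\perp_A$, or leaning on the assumption of faithfulness between $A$, $Y$, and the component of $X$ that $f$ is based on—address a step the paper leaves implicit rather than diverging from its argument.
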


Similar statements can be made for any group fairness metric. For example, the notion of false negative error rate balance, also known as equal opportunity \cite{hardt16}, is identical to equalized odds but only considers individuals who have a positive label ($Y = 1$). The case for this metric is based on false negatives being a particular moral or legal concern, motivated by principles such as "innocent until proven guilty," in which a false negative represents an innocent person ($Y = 1$) who is found guilty ($f(X) = 0$). False negative error rate balance is assessed in the same way as equalized odds but only with observations that have a positive true label, which may be consequential if different causal structures are believed to obtain for different groups.

\Cref{tab:metrics} shows the ten group fairness metrics presented in \citet{verma18} that can be expressed as conditional independencies. \Cref{thm:cfcorrespondence} specified the correspondence between three of these (demographic parity, equalized odds, and calibration), and we extend to the other seven in \Cref{cor:cfcorrespondenceextension}.

\begin{table}[t]
\caption{Group fairness metrics from \citet{verma18}.}
\centering
\makebox[\textwidth][c]{
    \begin{tabular}{ |c|c|l| }
    \hline
     \multicolumn{1}{|c|}{\textbf{Name}} & \multicolumn{1}{|c|}{\textbf{Probability Definition}} & \makecell{\textbf{Independence} \T \\ \textbf{Definition} \B} \\ \hline
     \makecell{Demographic Parity \T\B} & $\mathbb{P}(D=1 \mid A=0) = \mathbb{P}(D=1 \mid A=1)$ & $D \perp A$ \\ \hline
     \makecell{Conditional \T \\ Demographic Parity \B} & $\mathbb{P}(D=1 \mid A=0,L=l) = \mathbb{P}(D=1 \mid A=1,L=l)$ & $D \perp A \mid L = l$ \T\B \\ \hline
     \makecell{Equalized Odds \T\B} & $\mathbb{P}(D=1 \mid A=0,Y=y) = \mathbb{P}(D=1 \mid A=1,Y=y)$ & $D \perp A \mid Y$ \\ \hline
     \makecell{False Positive \T \\ Error Rate Balance \B} & $\mathbb{P}(D=1 \mid A=0,Y=0) = \mathbb{P}(D=1 \mid A=1,Y=0)$ & $D \perp A \mid Y=0$ \\ \hline
     \makecell{False Negative \T \\ Error Rate Balance \B} & $\mathbb{P}(D=0 \mid A=0,Y=1) = \mathbb{P}(D=0 \mid A=1,Y=1)$ & $D \perp A \mid Y=1$ \\ \hline
     \makecell{Balance for \T \\ Negative Class \B} & $\mathbb{E}[S \mid A=0,Y=0] = \mathbb{E}[S \mid A=1,Y=0]$ & $S \perp A \mid Y=0$ \\ \hline
     \makecell{Balance for \T \\ Positive Class \B} & $\mathbb{E}[S \mid A=0,Y=1] = \mathbb{E}[S \mid A=1,Y=1]$ & $S \perp A \mid Y=1$ \\ \hline
     \makecell{Conditional Use \T \\ Accuracy Equality \\ (i.e., Calibration) \B} & \makecell{$\mathbb{P}(Y=y \mid A=0,D=d) = \mathbb{P}(Y=y \mid A=1,D=d)$} & $Y \perp A \mid D$ \\ \hline
     \makecell{Predictive Parity \T\B} & $\mathbb{P}(Y=1 \mid A=0,D=1) = \mathbb{P}(Y=1 \mid A=1,D=1)$ & $Y \perp A \mid D=1$ \\ \hline
     \makecell{Score Calibration \T\B} & $\mathbb{P}(Y=1 \mid A=0,S=s) = \mathbb{P}(Y=1 \mid A=1,S=s)$ & $Y \perp A \mid S$ \\ \hline
    \end{tabular}
    \label{tab:metrics}
}
\end{table}

We have so far restricted ourselves to a binary classifier $f(X) \in \{0,1\}$. Here, we denote this as a decision $f(x) = d \in D = \{0,1\}$ and also consider probabilistic classifiers that produce a score $s$ that can take any probability from 0 to 1, i.e., $f(x) = s \in S = [0,1] = \mathbb{P}(Y=1)$. The metrics of balance for positive class, balance for negative class, and score calibration are defined by \citet{verma18} in terms of score. \citet{verma18} refer to calibration for binary classification (\Cref{defn:c}) as "conditional use accuracy equality." To differentiate these, we henceforth refer to the binary classification metric as "binary calibration" and the probabilistic classification metric as "score calibration."

\begin{minipage}{\textwidth}
    \begin{restatable}{corollary}{cfcorrespondenceextension}
    \label{cor:cfcorrespondenceextension}
    Let the causal structure be a causal DAG with $X^\perp_Y$, $X^\perp_A$, $Y$, and $A$, such as in \cref{fig:exdags}. Assume faithfulness between $A$, $Y$, and $f(X)$. Then:
    \begin{enumerate}
        \item For a binary classifier, counterfactual fairness is equivalent to {\bf conditional demographic parity} if and only if, when a set of legitimate factors $L$ is held constant at level $l$, there is no unblocked path between $X^\perp_A$ and $A$.
        \item For a binary classifier, counterfactual fairness is equivalent to {\bf false positive error rate balance} if and only if, for the subset of the population with negative label (i.e., $Y = 0$), there is no path between $X^\perp_A$ and $A$, a path blocked by a variable other than $Y$, or an unblocked path that contains $Y$.
        \item For a binary classifier, counterfactual fairness is equivalent to {\bf false negative error rate balance} if and only if, for the subset of the population with positive label (i.e., $Y = 1$), there is no path between $X^\perp_A$ and $A$, a path blocked by a variable other than $Y$, or an unblocked path that contains $Y$.
        \item For a probabilistic classifier, counterfactual fairness is equivalent to {\bf balance for negative class} if and only if, for the subset of the population with negative label (i.e., $Y = 0$), there is no path between $X^\perp_A$ and $A$, a path blocked by a variable other than $Y$, or an unblocked path that contains $Y$.
        \item For a probabilistic classifier, counterfactual fairness is equivalent to {\bf balance for positive class} if and only if, for the subset of the population with positive label (i.e., $Y = 1$), there is no path between $X^\perp_A$ and $A$, a path blocked by a variable other than $Y$, or an unblocked path that contains $Y$.
        \item For a probabilistic classifier, counterfactual fairness is equivalent to {\bf predictive parity} if and only if, for the subset of the population with positive label (i.e., $D = 1$), there is no path between $Y$ and $A$, a path blocked by a variable other than $X^\perp_A$, or an unblocked path that contains $X^\perp_A$.
        \item For a probabilistic classifier, counterfactual fairness is equivalent to {\bf score calibration} if and only if there is no path between $Y$ and $A$, a path blocked by a variable other than $X^\perp_A$, or an unblocked path that contains $X^\perp_A$.
    \end{enumerate}
    \end{restatable}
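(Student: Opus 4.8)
The plan is to deduce \Cref{cor:cfcorrespondenceextension} from \Cref{thm:cfcorrespondence} by noting that each of the seven derivative metrics is obtained from one of the three primary metrics by an elementary operation: (i) restricting attention to a subpopulation, i.e. conditioning on a fixed value of $Y$ or $D$ (false positive / false negative error rate balance, balance for the negative / positive class, predictive parity); (ii) enlarging the conditioning set by a vector of legitimate factors $L$ held at level $l$ (conditional demographic parity); or (iii) replacing the binary decision $D$ by a probabilistic score $S$ (balance for the negative / positive class, predictive parity, score calibration). So I would re-run the argument behind \Cref{thm:cfcorrespondence} with the conditioning set modified accordingly, and verify that the only change propagated to the conclusion is the corresponding modification of the d-separation criterion stated in each clause.

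Concretely, recall the two ingredients of \Cref{thm:cfcorrespondence}: under the purely-spurious assumption and exogeneity of $A$, a predictor $f$ is counterfactually fair iff it is a function of $X^\perp_A$; and, under faithfulness between $A$, $Y$, and $f(X)$, such a predictor satisfies a conditional-independence metric of the form $f(X)\perp A\mid C$ (resp.\ $Y\perp A\mid f(X),C$) iff $X^\perp_A$ is d-separated from $A$ given $C$ (resp.\ $Y$ is d-separated from $A$ given $X^\perp_A$ together with $C$), while any predictor satisfying the metric but reading off the $A$-affected part of $X$ would contradict faithfulness. Each clause of the corollary is then the instantiation $C=\{L=l\}$ (clause 1), $C=\{Y=0\}$ (clauses 2, 4), $C=\{Y=1\}$ (clauses 3, 5), $C=\{D=1\}$ (clause 6), or $C=\emptyset$ (clause 7). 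Since d-separation is insensitive to the particular value conditioned on, instantiating $C=\{Y=y\}$ yields exactly the criterion of clause 2 (or clause 3) of \Cref{thm:cfcorrespondence}, now read inside the causal model governing that subpopulation --- which is precisely why stating these metrics separately has content, as remarked after the theorem, since one may posit different graphs for different groups. For conditional demographic parity the criterion becomes ``no path between $X^\perp_A$ and $A$ left unblocked once $L$ is placed in the conditioning set,'' as in the statement.

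For the probabilistic lift I would observe that for binary $D\in\{0,1\}$ equality of $\mathbb{P}(D=1\mid A,C)$ across $A$ is literally $D\perp A\mid C$, and that \citet{verma18} define the score metrics by the same independence statements ($S\perp A\mid Y$ for balance, $Y\perp A\mid S$ for score calibration, as in \Cref{tab:metrics}); the mean-equality and the independence coincide under our faithfulness hypothesis, which forbids a conditional expectation that is constant in $A$ without the underlying independence. A counterfactually fair probabilistic predictor is still a function of $X^\perp_A$, and the faithfulness hypothesis is phrased for $f(X)$ regardless of whether its range is $\{0,1\}$ or $[0,1]$, so the argument carries over verbatim with $S$ in place of $D$: balance for the negative / positive class is the probabilistic analogue of false positive / negative error rate balance and inherits clause 2's criterion on the relevant subpopulation; predictive parity is the $D=1$-restricted analogue of binary calibration and inherits clause 3's criterion there; score calibration is the unrestricted analogue and inherits clause 3 outright. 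The necessity (``only if'') direction transfers identically: the counterfactually fair predictor violating a primary metric when the matching d-separation fails, built as in the proof of \Cref{thm:cfcorrespondence}, restricts to --- or, for scores, lifts to --- a counterfactually fair predictor violating the derived metric.

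I expect the main obstacle to be keeping the faithfulness bookkeeping honest after conditioning on a subpopulation indicator: conditioning on a fixed value $Y=y$ or $D=1$ can open paths through colliders that are $Y$, $D$, or their descendants, so ``blocked'' and ``unblocked'' in each clause must be interpreted relative to the enlarged conditioning set, and the step ``metric $\iff$ d-separation'' must be justified using $f(X)$-faithfulness within the subpopulation's causal model rather than the global one. A secondary subtlety is that for a continuous score the mean-equality formulation of \citet{verma18} is formally weaker than the stated independence, so I would fix a convention up front --- either adopt the independence definitions of \Cref{tab:metrics} directly or absorb the gap into the faithfulness assumption --- and flag it explicitly. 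Everything else (rewriting each conditional independence in $X^\perp_A$ as one in $f(X)$, and reading paths off the DAGs of \cref{fig:exdags}) is routine given \Cref{thm:cfcorrespondence}.
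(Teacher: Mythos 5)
Your proposal is correct and takes essentially the same route as the paper: the paper's proof likewise reduces each of the seven derivative metrics to one of the three graphical tests of \Cref{thm:cfcorrespondence} via restriction to a subpopulation ($Y=y$ or $D=1$), addition of $L=l$ to the conditioning set, or the binary-to-probabilistic lift (justified, as you argue, by the causal graph being unchanged when $f(X^\perp_A)$ becomes probability-valued). If anything you are more careful than the paper, which states each reduction in a single line and does not explicitly address the two subtleties you flag (collider paths opened by conditioning on $Y$ or $D$, and the gap between mean-equality and full independence for continuous scores).
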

\end{minipage}

This general specification can be unwieldy, so to illustrate this, we provide three real-world examples shown in \Cref{fig:exdags}: measurement error, selection on label (i.e., "post-treatment bias" or "selection on outcome" if the label $Y$ is causally affected by $X$), and selection on predictors. These imply demographic parity, equalized odds, and calibration, respectively.

\begin{restatable}{corollary}{cfcorrespondenceexamples}
\label{cor:cfcorrespondenceexamples}
Assume faithfulness between $A$, $Y$, and $f(X)$.
\begin{enumerate}
    \item Under the graph with measurement error as shown in \cref{fig:exdags_me}, a predictor achieves counterfactual fairness if and only if it achieves demographic parity.
    \item Under the graph with selection on label as shown in \cref{fig:exdags_so}, a predictor achieves counterfactual fairness if and only if it achieves equalized odds.
    \item Under the graph with selection on predictors as shown in \cref{fig:exdags_sp}, a predictor achieves counterfactual fairness if and only if it achieves calibration.
\end{enumerate}
\end{restatable}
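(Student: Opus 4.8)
The plan is to obtain \Cref{cor:cfcorrespondenceexamples} as an immediate specialization of \Cref{thm:cfcorrespondence}: for each of the three DAGs in \cref{fig:exdags} I would enumerate every path between the pair of nodes that \Cref{thm:cfcorrespondence} examines---$X^\perp_A$ and $A$ for the demographic-parity and equalized-odds clauses, $Y$ and $A$ for the calibration clause---classify each path as blocked or unblocked under the graph's conditioning set, and record whether the ``critical'' node ($Y$ or $X^\perp_A$) lies on it. Since in each graph the only predictor component that is not a descendant of $A$ is $X^\perp_A$, a counterfactually fair $f(X)$ depends only on $X^\perp_A$ and hence inherits exactly its d-separation relations, so the path bookkeeping translates directly into statements about $f(X)$, $Y$, and $A$ and the hypotheses of \Cref{thm:cfcorrespondence} apply verbatim.

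For measurement error (\cref{fig:exdags_me}), the only path between $X^\perp_A$ and $A$ is $X^\perp_A \leftrightarrow \tilde{Y} \to Y \leftarrow A$, which has an unconditioned collider at $Y$; since $X^\perp_Y$ is a childless node whose only neighbor is $A$, no path can be rerouted through it, so there is no unblocked $X^\perp_A$--$A$ path and the first clause of \Cref{thm:cfcorrespondence} yields equivalence with demographic parity. For selection on label (\cref{fig:exdags_so}), the only path between $X^\perp_A$ and $A$ is $X^\perp_A \leftrightarrow Y \to S \leftarrow A$; because $S$ is conditioned on in the data-generating process, the collider at $S$ is open, so the path is unblocked, it contains $Y$, and it matches the hypothesis of the second clause exactly---giving equivalence with equalized odds (and the existence of an unblocked $X^\perp_A$--$A$ path simultaneously rules out the demographic-parity case). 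For selection on predictors (\cref{fig:exdags_sp}), the only path between $Y$ and $A$ is $Y \leftrightarrow X^\perp_A \to S \leftarrow A$, again unblocked through the conditioned collider $S$ and containing $X^\perp_A$, which is the hypothesis of the third clause---giving equivalence with calibration.

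The work here is routine; the only care needed is in the d-separation bookkeeping: (i) a bidirected edge stands in for a latent common cause, so it acts as a non-collider edge along a path but still forms a collider when two arrowheads meet at a shared node; (ii) a selection node drawn as a box is conditioned on, so a collider at that node is open rather than blocking; and (iii) the path enumeration must be verified exhaustive, which is easy because in all three graphs $X^\perp_Y$ is a sink hanging off $A$. I do not anticipate a genuine obstacle, since the reverse implications (``metric $\Rightarrow$ counterfactual fairness'') are already delivered by \Cref{thm:cfcorrespondence} through the faithfulness assumption that the corollary restates.
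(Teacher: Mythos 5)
Your proposal is correct and follows essentially the same route as the paper: specialize \Cref{thm:cfcorrespondence} to each DAG by enumerating the (unique) relevant path, noting that the collider at $Y$ blocks the $X^\perp_A$--$A$ path in the measurement-error graph while the conditioned selection node $S$ opens the corresponding paths in the two selection graphs. Your version is if anything slightly more explicit than the paper's about the exact path structure and the role of the bidirected edges, but there is no substantive difference in the argument.
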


\section{Experiments}

We conducted brief experiments in a semi-synthetic setting to show that a counterfactually fair predictor achieves robust prediction and group fairness. We used the Adult income dataset \cite{becker96} with a simulated protected class $A$, balanced with $P(A=0)=P(A=1)=0.5$. For observations with $A=1$, we manipulated the input data to simulate a causal effect of $A$ on $X$: $P(\texttt{race}=\texttt{Other})=0.8$. With this as the target distribution, we produced three biased datasets that result from each effect produced with a fixed probability for observations when $A=1$: measurement error ($P=0.8$), selection on label ($P=0.5$), and selection on predictors ($P=0.8$).

On each dataset, we trained three predictors: a naive predictor trained on $A$ and $X$, a fairness through unawareness (FTU) predictor trained only on $X$, and a counterfactually fair predictor based on an average of the naive prediction under the assumption that $A=1$ and the naive prediction under the assumption $A=0$, weighted by the proportion of each group in the target distribution.

\begin{restatable}{theorem}{cfpredictor}
\label{thm:cfpredictor}
    Let $X$ be an input dataset $X \in \mathcal{X}$ with a binary label $Y \in \mathcal{Y} = \{0,1\}$ and protected class $A \in \{0,1\}$. Define a predictor:
    $$
        f_{naive} :=  \underset{f}{\operatorname{argmin}} \ \mathbb{E}[\ell(f(X,A), Y)]
    $$
    where $f$ is a proper scoring rule. Define another predictor:
    $$
        f_{CF} := \mathbb{P}(A=1)f_{naive}(X,1) + \mathbb{P}(A=0)f_{naive}(X,0)
    $$
    If the association between $Y$ and $A$ is purely spurious, then $f_{CF}$ is counterfactually fair.
\end{restatable}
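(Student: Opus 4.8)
The plan is to show that $f_{CF}$, viewed as a function on $\mathcal{X}$, depends on $X$ only through the sub-component $X^\perp_A$ that is not causally affected by $A$; counterfactual fairness (\Cref{defn:cf}), i.e.\ $f_{CF}\in\mathcal{F}^{\textrm{CF}}$, is then immediate because $X^\perp_A$ takes the same value in every counterfactual $X(a)$. First I would use that $\ell$ is a proper scoring rule: the unconstrained minimizer of $\mathbb{E}[\ell(f(X,A),Y)]$ over all measurable $f$ is the conditional mean, so $f_{naive}(x,a)=\mathbb{E}[Y\mid X=x,A=a]$ (for binary $Y$ this equals $\mathbb{P}(Y=1\mid X=x,A=a)$). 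Consequently $f_{CF}(x)=\mathbb{P}(A=1)\,\mathbb{E}[Y\mid X=x,A=1]+\mathbb{P}(A=0)\,\mathbb{E}[Y\mid X=x,A=0]$.

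Next I would invoke the purely spurious assumption $Y\perp X\mid X^\perp_A,A$ (\Cref{defn:ps}), which yields $\mathbb{E}[Y\mid X=x,A=a]=\mathbb{E}[Y\mid X^\perp_A=x^\perp_A,A=a]$: given $A=a$, the regression of $Y$ on $X$ depends on $x$ only through $x^\perp_A$. Substituting into the expression above, and using that the mixing weights $\mathbb{P}(A=a)$ are constants, gives $f_{CF}(x)=g(x^\perp_A)$ for the function $g(u):=\mathbb{P}(A=1)\,\mathbb{E}[Y\mid X^\perp_A=u,A=1]+\mathbb{P}(A=0)\,\mathbb{E}[Y\mid X^\perp_A=u,A=0]$. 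Finally, since $X^\perp_A$ is by definition not causally affected by $A$, its potential outcomes satisfy $X^\perp_A(a)=X^\perp_A(a')$ for all $a,a'\in\{0,1\}$, so $f_{CF}(X(a))=g(X^\perp_A(a))=g(X^\perp_A(a'))=f_{CF}(X(a'))$, which is counterfactual fairness.

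The argument is short and I do not anticipate a real obstacle; the two places that need care are (i) stating precisely why proper-scoring-rule risk minimization recovers the conditional expectation rather than some other functional, and handling non-uniqueness of the argmin (this only bites on a probability-zero set and is resolved by taking the conditional-mean version of $f_{naive}$), and (ii) applying the purely spurious independence in the correct direction — that conditioning additionally on the rest of $X$ leaves the $Y$-regression unchanged given $A$ — which is exactly the content of \Cref{defn:ps}. I would also note explicitly that the claim uses only the purely spurious condition and not faithfulness or exogeneity of $A$, so those hypotheses need not be invoked here.
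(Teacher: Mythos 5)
Your proposal is correct and follows essentially the same route as the paper's proof: identify $f_{naive}(X,A)=\mathbb{P}(Y=1\mid X,A)$ via the proper scoring rule, reduce this to $\mathbb{P}(Y=1\mid X^\perp_A,A)$ using the purely spurious condition, and conclude that $f_{CF}$ depends on $X$ only through $X^\perp_A$, which is invariant under counterfactual changes to $A$. Your write-up is somewhat more explicit than the paper's (notably in spelling out the potential-outcome invariance $X^\perp_A(a)=X^\perp_A(a')$ and the measure-zero caveat on the argmin), but the underlying argument is the same.
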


For robust prediction, we show that the counterfactually fair (CF) predictor has accuracy in the target distribution near the accuracy of a predictor trained directly on the target distribution. For group fairness, we show that the counterfactually fair predictor achieves demographic parity, equalized odds, and calibration, corresponding to the three biased datasets. Results are shown in \Cref{tab:experiment_rp} and \Cref{tab:experiment_fair}, and code to reproduce these results or produce results with varied inputs (number of datasets sampled, effect of $A$ on $X$, probabilities of each bias, type of predictor) is available at \href{https://github.com/jacyanthis/Causal-Context}{https://github.com/jacyanthis/Causal-Context}.

\begin{table}[t]
\caption{Experimental results: Robust prediction}
\centering
\makebox[\textwidth][c]{
    \begin{tabular}{ ccccccccc }
        & \multicolumn{3}{c}{\bf Source Accuracy \T\B} & & \multicolumn{4}{c}{\bf Target Accuracy} \\ \cline{2-4} \cline{6-9}
        & \multicolumn{1}{c}{\bf Naive \T\B} & \multicolumn{1}{c}{\bf FTU} &  \multicolumn{1}{c}{\bf CF} & \hspace{0.25cm} & \multicolumn{1}{c}{\bf Naive} & \multicolumn{1}{c}{\bf FTU} & \multicolumn{1}{c}{\bf CF} & \makecell{\bf Target \T \\ \bf Trained \B} \\ \hline
        Measurement Error \T\B & 0.8283 & 0.7956 & 0.7695 & & 0.8031 & 0.8114 & 0.8204 & 0.8674 \\ \hline 
        Selection on Label & 0.8771 & 0.8686 & 0.8610 & & 0.8566 & 0.8652 & 0.8663 & 0.8655 \T\B \\ \hline 
        Selection on Predictors & 0.8659 & 0.656 & 0.8658 & & 0.8700 & 0.8698 & 0.8699 & 0.8680 \T\B \\ \hline 
    \end{tabular}
    \label{tab:experiment_rp}
}
\end{table}

\begin{table}[t]
\caption{Experimental results: Fairness metrics}
\centering
\makebox[\textwidth][c]{
    \begin{tabular}{ cccc }
        & \makecell{\bf Demographic \T \\ \bf Parity \\ \bf Difference (CF) \B} & \makecell{\bf Equalized \\ \bf Odds \\ \bf Difference (CF)} &  \makecell{\bf Calibration  \\ \bf Difference (CF)} \\ \hline
        Measurement Error & {\bf -0.0005} & 0.0906 & -0.8158 \T\B \\ \hline 
        Selection on Label & 0.1321 & {\bf -0.0021} & 0.2225 \T\B \\ \hline 
        Selection on Predictors & 0.1428 & 0.0789 & {\bf 0.0040} \T\B \\ \hline 
    \end{tabular}
    \label{tab:experiment_fair}
}
\end{table}

\section{Discussion}
In this paper, we provided a new argument for counterfactual fairness—that the supposed trade-off between fairness and accuracy \cite{corbett-davies17} can evaporate under plausible conditions when the goal is accuracy in an unbiased target distribution (\Cref{thm:minimalrisk}). To address the challenge of trade-offs between different group fairness metrics, such as their mutual incompatibility \cite{kleinberg16} and the variation in costs of certain errors such as false positives and false negatives \cite{flores16}, we provided a conceptual tool for adjudicating between them using knowledge of the underlying causal context of the social problem (\Cref{thm:cfcorrespondence} and \Cref{cor:cfcorrespondenceextension}). We illustrated this for the three most common fairness metrics, in which the bias of measurement error implies demographic parity; selection on label implies equalized odds; and selection on predictors implies calibration (\Cref{cor:cfcorrespondenceexamples}), and we showed a minimal example by inducing particular biases on a simulated protected class in the Adult income dataset.

There are nonetheless important limitations that we hope can be addressed in future work. First, the counterfactual fairness paradigm still faces significant practical challenges, such as ambiguity and identification. Researchers can use causal discovery strategies developed in a fairness context \cite{binkyte-sadauskiene22, galhotra22} to identify the causal structure of biases in real-world datasets and ensure theory like that outlined in this paper can be translated to application. Second, a key assumption in our paper and related work has been the assumption that associations between $Y$ and $A$ are "purely spurious," a term coined by \citet{veitch21} to refer to cases where, if one conditions on $A$, then the only information about $Y$ in $X$ is in the component of $X$ that is not causally affected by $A$. This has provided a useful conceptual foothold to build theory, but it should be possible for future work to move beyond the purely spurious case, such as by articulating distribution shift and deriving observable signatures of counterfactual fairness in more complex settings \cite{plecko22, nabi18}.

\section{Acknowledgments}
We thank Bryon Aragam, James Evans, and Sean Richardson for useful discussions.

\printbibliography

\newpage
\appendix
\renewcommand{\theequation}{A.\arabic{equation}}

\section{Proofs}

\minimalrisk*
\begin{proof}
Counterfactual fairness is a case of counterfactual invariance. By Lemma 3.1 in \citet{veitch21}, this implies $X$ is $X^\perp_A$-measurable. Therefore,
    $$
        \underset{f \in \mathcal{F}^{\textrm{CF}}}{\operatorname{argmin}} \ \mathbb{E}_{P}[\ell(f(X), Y)] = \underset{f \in \mathcal{F}^{\textrm{CF}}}{\operatorname{argmin}} \ \mathbb{E}_{P}[\ell(f(X^\perp_A), Y)]
    $$
Following the same reasoning as Theorem 4.2 in \citet{veitch21}, it is well-known that under squared error or cross entropy loss the risk minimizer is $f^*(x^\perp_A) = \mathbb{E}_{P}[Y \mid x^\perp_A]$. Because the target distribution $Q$ has no selection (and no confounding because $A$ is exogenous in the case of counterfactual fairness), the risk minimizer in the target distribution is the same as the counterfactually fair risk minimizer in the target distribution, i.e., $\mathbb{E}_{Q}[Y \mid x] = \mathbb{E}_{Q}[Y \mid x^\perp_A]$. Thus our task is to show $\mathbb{E}_{P}[Y \mid x^\perp_A] = \mathbb{E}_{Q}[Y \mid x^\perp_A]$.

Selection on label is shown in \Cref{fig:exdags_so}. Because $X^\perp_A$ does not d-separate $Y$ and $A$, $f^*(X)$ depends on the marginal distribution of $Y$, so we need an additional assumption that $P(Y) = Q(Y)$. We can use this with Bayes' theorem to show the equivalence of the conditional distributions,
\begin{align}
    Q(Y \mid X^\perp_A) &= \dfrac{Q(X^\perp_A \mid Y)Q(Y)}{\int Q(X^\perp_A \mid Y)Q(Y) \mathrm{d} y} \\
    &= \dfrac{P(X^\perp_A \mid Y)Q(Y)}{\int P(X^\perp_A \mid Y)Q(Y) \mathrm{d} y} \\
    &= \dfrac{P(X^\perp_A \mid Y)P(Y)}{\int P(X^\perp_A \mid Y)P(Y) \mathrm{d} y} \\
    &= P(Y \mid X^\perp_A),
\end{align}
where the first and fourth lines follow from Bayes' theorem, the second line follows from the causal structure ($X$ causes $Y$), and the third line follows from the assumption that $P(Y) = Q(Y)$. This equality of distributions implies equality of expectations.

Selection on predictors is shown in \Cref{fig:exdags_sp}. Because $X^\perp_A$ d-separates $Y$ and $A$, $f^*(X)$ does not depend on the marginal distribution of $Y$, so we immediately have an equality of conditional distributions, $Q(Y \mid X^\perp_A) = P(Y \mid X^\perp_A)$, and equal distributions have equal expectations, $\mathbb{E}_{P}[Y \mid x^\perp_A] = \mathbb{E}_{Q}[Y \mid x^\perp_A]$.
\end{proof}

\cfcorrespondence*
\begin{proof}
For a predictor $f(X)$ to be counterfactually fair, it must only depend on $X^\perp_A$.
\begin{enumerate}
    \item By \Cref{defn:dp}, demographic parity is achieved if and only if $X^\perp_A \perp A$. In a DAG, variables are dependent if and only if there is an unblocked path between them (i.e., no unconditioned \textbf{collider} in which two arrows point directly to the same variable). For example, \cref{fig:expaths_unblocked} shows an unblocked path between $A$ and $X^\perp_A$.
    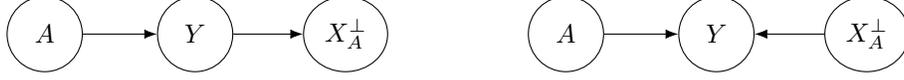
\begin{figure}[t]
        \centering
        \begin{subfigure}[b]{.49\textwidth}
            \centering
            \begin{tikzpicture}
                \node[state] (xperpa) at (0,0) {$X^\perp_A$};
                \node[state] (y) at (-2,0) {$Y$};
                \node[state] (a) at (-4,0) {$A$};
                
                \path (y) edge (xperpa);
                \path (a) edge (y);
            \end{tikzpicture}
            \subcaption{Example of an unblocked path between $A$ and $X^\perp_A$}
            \label{fig:expaths_unblocked}
        \end{subfigure}
        \begin{subfigure}[b]{.49\textwidth}
            \centering
            \begin{tikzpicture}
                \node[state] (xperpa) at (0,0) {$X^\perp_A$};
                \node[state] (y) at (-2,0) {$Y$};
                \node[state] (a) at (-4,0) {$A$};
                
                \path (xperpa) edge (y);
                \path (a) edge (y);
            \end{tikzpicture}
            \subcaption{Example of a blocked path between $A$ and $X^\perp_A$}
            \label{fig:expaths_blocked}
        \end{subfigure}
      \caption{Examples of an unblocked path and a blocked path in a causal DAG.}
    \label{fig:expaths}
    \end{figure}
    \item By \Cref{defn:eo}, equalized odds are achieved if and only if $X^\perp_A \perp A \mid Y$. If there is no path between $X^\perp_A$ and $A$, then $X^\perp_A$ and $A$ are independent under any conditions. If there is a blocked path between $X^\perp_A$ and $A$, and it has a block that is not $Y$, then $X^\perp_A$ and $A$ remain independent because a blocked path induces no dependence. If there is an unblocked path between $X^\perp_A$ and $A$ that contains $Y$, then $Y$ d-separates $X^\perp_A$ and $A$, so $X^\perp_A$ and $A$ remain independent when conditioning on $Y$. On the other hand, if there is a blocked path between $X^\perp_A$ and $A$ and its only block is $Y$, then conditioning on the block induces dependence. If there is an unblocked path that does not contain $Y$, then $X^\perp_A$ and $A$ are dependent.
    \item With \Cref{defn:c}, we can apply analogous reasoning to the case of equalized odds. Calibration is achieved if and only if $Y \perp A \mid X^\perp_A$. If there is no path between $Y$ and $A$, then $Y$ and $A$ are independent under any conditions. If there is a blocked path between $Y$ and $A$, and it has a block is not $X^\perp_A$, then $Y$ and $A$ remain independent because a blocked path induces no dependence. If there is an unblocked path between $Y$ and $A$ that contains $X^\perp_A$, then $X^\perp_A$ d-separates $Y$ and $A$, so $Y$ and $A$ remain independent when conditioning on $X^\perp_A$. On the other hand, if there is a blocked path between $Y$ and $A$ and its only block is $X^\perp_A$, then conditioning on the block induces dependence. If there is an unblocked path that does not contain $X^\perp_A$, then $Y$ and $A$ are dependent.
\end{enumerate}
\end{proof}

\cfcorrespondenceextension*
\begin{proof}
Each of these seven metrics can be stated as a conditional independence statement, as shown in \Cref{tab:metrics}, and each of the seven graphical tests of those statements can be derived from one of the three graphical tests in \Cref{thm:cfcorrespondence}. Note that the graphical test for a binary classifier is the same as that for the corresponding probabilistic classifiers because the causal graph does not change when $f(X^\perp_A)$ changes from a binary-valued (i.e., $f(X) \in \{0,1\}$) function to a probability-valued function (i.e., $f(X) \in [0,1]$).

From demographic parity:
\begin{enumerate}
    \item Conditional demographic parity is equivalent to demographic parity when some set of legitimate factors $L$ is held constant at some value $l$.
\end{enumerate}

From equalized odds:
\begin{enumerate}[resume]
    \item False positive error rate balance is equivalent to equalized odds when considering only the population with negative label (i.e., $Y = 0$).
    \item False negative error rate balance is equivalent to equalized odds when considering only the population with positive label (i.e., $Y = 1$).
    \item Balance for negative class is equivalent to equalized odds for probabilistic classifiers when considering only the population with negative label (i.e., $Y = 0$).
    \item Balance for positive class is equivalent to equalized odds for probabilistic classifiers when considering only the population with negative label (i.e., $Y = 1$).
\end{enumerate}

From binary calibration:
\begin{enumerate}[resume]
    \item Predictive parity is equivalent to binary calibration when considering only the population with positive label (i.e., $D = 1$).
    \item Score calibration is equivalent to binary calibration for probabilistic classifiers.
\end{enumerate}
\end{proof}

\cfcorrespondenceexamples*
\begin{proof}
By \Cref{thm:cfcorrespondence}:
\begin{enumerate}
    \item Observe in \Cref{fig:exdags_me} that the only path between $X^\perp_A$ and $A$ is blocked by $Y$, so counterfactual fairness implies demographic parity. Because the only block in that path is $Y$, counterfactual fairness does not imply equalized odds. And the only path between $Y$ and $A$ (a parent-child relationship) is unblocked and does not contain $X^\perp_A$, so counterfactual fairness does not imply calibration.
    \item Observe in \Cref{fig:exdags_so} that the only path between $X^\perp_A$ and $A$ is unblocked (because $S$ is necessarily included in the predictive model), so counterfactual fairness does not imply demographic parity. Because that path contains $Y$, counterfactual fairness implies equalized odds. And the only path between $Y$ and $A$ is unblocked (because $S$ is necessarily included in the predictive model) and does not contain $X^\perp_A$, so counterfactual fairness does not imply calibration.
    \item Observe in \Cref{fig:exdags_sp} that the only path between $X^\perp_A$ and $A$ is unblocked (because $S$ is necessarily included in the predictive model), so counterfactual fairness does not imply demographic parity. Because that path does not contain $Y$, counterfactual fairness does not imply equalized odds. And the only path between $Y$ and $A$ is unblocked (because $S$ is necessarily included in the predictive model) and contains $X^\perp_A$, so counterfactual fairness implies calibration.
\end{enumerate}
\end{proof}

\cfpredictor*
\begin{proof}
Notice that $f_{CF}$ does not depend on $A$ directly because the realization of $A$ is not in the definition. To show that $f_{CF}$ also does not depend on $A$ indirectly (i.e., through $X$), consider that a purely spurious association means that $Y \perp X \mid X^\perp_A, A$. Therefore, the naive predictor:
\begin{align*}
    f_{naive}(X,A) &= \mathbb{P}(Y=1|X,A) \\
    &= \mathbb{P}(Y=1|X^\perp_A,A)
\end{align*}
Because $X^\perp_A$ is the component of $X$ that is not causally affected by $A$, there is no term in $f_{CF}$ that depends on $A$, which means $f_{CF}$ is counterfactually fair.
\end{proof}

\end{document}